\newcommand{\ie}{\emph{i.e.}, }%
\newcommand{\eg}{\emph{e.g.}, }%
\newcommand{\erwan}[1]{\textcolor{black}{#1}}
\newcommand{\gilles}[1]{\textcolor{black}{#1}}
\newtheorem{definition}{Definition}%
\theoremstyle{plain}
\newtheorem{lem}{Proposition}
\newtheorem{obs}{Observation}
\definecolor{Gray}{gray}{0.9}
\title{
The Bouncer Problem: Challenges to Remote Explainability
}
\author{
  Erwan {Le Merrer}\\
     Univ Rennes, Inria, CNRS, Irisa\\
       erwan.le-merrer@inria.fr 
  \and Gilles Tr\'edan\\
       LAAS/CNRS\\
       gtredan@laas.fr
}
\begin{document}

\date{}

\maketitle

 \begin{abstract}
   The concept of explainability is envisioned to satisfy society's
   demands for transparency on machine learning decisions. The concept
   is simple: like humans, algorithms should explain the rationale
   behind their decisions so that their fairness can be
   assessed.

   While this approach is promising in \erwan{a local context (\eg the model creator
   explains it during debugging at training time)}, we argue that
   this reasoning cannot simply be transposed in a remote context,
   where a trained model by a service provider is only accessible
   \erwan{to a user through a network and its API}. This is problematic as it constitutes precisely
   the target use-case requiring transparency from a societal
   perspective.

   Through an analogy with a \textit{club bouncer} (which may provide
   untruthful explanations upon customer reject), we show that
   providing explanations cannot prevent a remote service from lying
   about the true reasons leading to its decisions.
   More precisely, we prove the impossibility of remote explainability
   for single explanations, by constructing an attack on
   explanations that hides discriminatory features to the querying user.
   
   We provide an example implementation of this attack. We then show
   that the probability that an observer spots the attack, using
   several explanations for attempting to find incoherences, is low in practical settings. This undermines
   the very concept of remote explainability in general.

\end{abstract}

\section{Introduction}

Modern decision-making driven by black-box systems now impacts a
significant share of our lives \cite{fatml,deLaat2018}.  These systems
build on user data, and range from recommenders \cite{fb-rec} (\eg for
personalized ranking of information on websites) to predictive
algorithms (\eg credit default) \cite{fatml}. This widespread deployment, along
with the opaque decision process provided by these systems
raises concerns about transparency for the general public or for policy
makers \cite{Goodman_Flaxman_2017}.
This translated in some jurisdictions (\eg United States of America and Europe) into a so called \textit{right to explanation} \cite{Goodman_Flaxman_2017, 10.1093/idpl/ipx022}, that states that the output decisions of an algorithm must be motivated.

\paragraph{Explainability of in-house models}
An already large body of work is interested in the
\textit{explainability} of implicit machine learning models (such as
neural network models)
\cite{adadi2018peeking,guidotti2018survey,molnar2019}.  Indeed,
these models show state-of-art performance when it comes to a task
accuracy, but they are not designed to provide explanations --or at
least intelligible decision processes-- when one wants to obtain more
than the output decision of the model.  In the context of
\textit{recommendation}, the expression ``post hoc explanation'' has
been coined \cite{rec-exp}.  In general, current techniques for
explainability of implicit models take trained in-house models and aim
at shedding light on some input features causing salient decisions in
their output space.  LIME \cite{lime} for instance builds a surrogate
model of a given black-box system that approximates its predictions around a region of interest.
\erwan{The surrogate is created from a new crafted dataset,
obtained from the permutation of the original dataset values around
the interesting zone (and the observation of the decisions made
for this dataset).}
This surrogate is an explainable model by construction (such as a
decision tree), so that it can explain some decision facing some input
data. The amount of queries to the black-box model is assumed to be
unbounded by LIME and others \cite{Galhotra:2017:FTT:3106237.3106277,shap},
permitting virtually exhaustive queries to it. This reduces
  their applicability to the inspection of in-house models by their
  designers.
  
\paragraph{The temptation to explain decisions to users.}
\erwan{As suggested by Andreou et
al. \cite{andreou2018ndss}, some institutions may apply the same reasoning in order
to explain some decisions to their users}. Indeed, this would
support the will for a more transparent and trusted web by the
public. Facebook for instance attempted to offer a form of
transparency for the ad mechanism targeting its users, by introducing
a ``Why I am seeing this'' button on received ads.  For a user, the
decision-making system (here, responsible of selecting relevant ads) is then \textit{remote}, and can be queried only using
inputs (its profile data) and the observation of system decisions.
Yet, from a security standpoint, \erwan{we consider a security model where the remote server (executing the service) is untrusted to the users, in the classic remote execution setup \cite{pdp}.}  Andreou et
al. \cite{andreou2018ndss} recently empirically observed in the case
of Facebook that these explanations are ``incomplete and can be
misleading'', conjecturing that malicious service providers can use
this incompleteness to hide the true reasons behind their decisions.

In this paper, we question the possibility of such an explanation
setup, from a corporate and private model in destination to users: we
go one step further%
by demonstrating that remote
explainability simply cannot be a \erwan{reliable guarantee of the lack of
 discrimination in the decision-making process}. In a remote black-box setup
such as the one of Facebook, we show that a simple attack, we coin the
Public Relations (PR) attack, undermines remote explainability.

\paragraph{The bouncer problem as a parallel for hardness}
For the sake of the demonstration, we introduce the \textit{bouncer
  problem} as an illustration of the difficulty for users to spot
malicious explanations. The analogy works as follows: let us picture a
bouncer at the door of a club, deciding whoever might enter the club.
When he issues a negative decision --refusing the entrance to a given
person--, he also provides an explanation for this rejection.
However, his explanation might be malicious, in the sense that his
explanation does not present the true reasons of this person's
rejection. Consider for instance a bouncer discriminating people based
on the color of their skin. Of course he will not tell people he
refuses the entrance based on that characteristic, since this is a
legal offence. He will instead invent a biased explanation
that the rejected person is likely to accept.

The classic way to
assess a  discrimination by the bouncer is for associations to run
tests (following the principle of statistical causality
\cite{CIS-247618} for instance): several persons attempt to enter,
while they only vary in their attitude or appearance on the possibly
discriminating feature (\eg the color of their skin).  Conflicting
decisions by the bouncer is then the indication of a possible discrimination and
is amenable to the building of a case for prosecution.

We make the parallel with bouncer decisions in this paper by
demonstrating that a user cannot trust a single (one-shot) explanation provided
by a remote model.\gilles{Moreover, we show that creating such malicious
explanations necessarily creates inconsistent answers for some inputs, and that} the only solution to spot those inconsistencies
is to issue multiple requests to the service. Unfortunately, we also
demonstrate the problem to be hard, in the sense that spotting an
inconsistency in such a way is intrinsically not more efficient than
for \erwan{a model creator to exhaustively search on her local model to identify a
problem}, which is often considered as an intractable process.

\paragraph{Rationale and organization of the paper}
We build a general setup for remote explainability in the next section,
that has the purpose of representing actions by a service provider and
by users, facing models decisions and explanations. The fundamental
 blocks for the impossibility proof of a reliable remote
explainability, or its hardness for multiple queries are presented in
Section \ref{s:model}. We present the \textit{bouncer problem} in
Section \ref{s:bouncer}, that users have to solve in order to detect
malicious explanations by the remote service provider. We then
illustrate the PR attack, that the malicious provider may execute to
remove discriminative explanations to users, on decision trees
(Section \ref{s:dt}). We then practically address the bouncer problem
by modeling a user trying to find inconsistencies from a provider
decisions based on the German Credit Dataset and a neural network
classifier, in Section \ref{s:nn}. We discuss open problems in Section
\ref{s:discussion}, %
  before reviewing related works in Section \ref{s:related} and
concluding in Section \ref{s:conclusion}.  Since we show that remote
explainability in its current form is undermined, this work thus aims
to be a motivation for researchers to explore the direction of
\textit{provable} explainability, by designing new protocols such as
for instance one implying cryptographic means (\eg such as in
\textit{proof of ownership} for remote storage), or to build
collaborative observation systems to spot inconsistencies and malicious
explanation systems.

\section{Explainability of remote decisions}
\label{s:model}

In this work, we study \textit{classifier models}, that will issue
decisions given user data. We first introduce the setup we operate
in: it is intended to be as general as possible, so that the results
drawn from it can apply widely.

\subsection{General Setup}
 We consider a classifier $C: \mathcal{X}\mapsto \mathcal{Y}$ that
 assigns inputs $x$ of the feature space $\mathcal{X}$ to a class
 $C(x)=y\in \mathcal{Y}$. Without loss of generality and to simplify
 the presentation, we will assume the case of a binary classifier:
 $\mathcal{Y}=\{0,1\}$; the decision is thus the output label
 returned by the classifier.

\paragraph{Discriminative features and classifiers}
To produce a decision, classifiers rely on features (variables) as an
input. These are for instance the variables associated to a user
profile on a given service platform (\eg basic demographics, political
affiliation, purchase behavior, residential profile
\cite{andreou2018ndss}).  In our model, we consider that the feature
space contains two types of features: \emph{discriminatory} and
\emph{legitimate} features.  The use of discriminatory features allows
for exhibiting the possibility of a malicious service provider issuing
decisions and biased explanations. This problematic is also
  referred to as \textit{rationalization} in a paper by A\"ivodji et al
  \cite{pmlr-v97-aivodji19a}.

Concretely, we consider \emph{discriminatory} features to be an
arbitrary subset of the input features, such that we can define these
as "any feature set the malicious service provider does not want to
explain". Two main reasons come to mind:
\begin{itemize}
\item Legal: the jurisdiction's law forbids decisions based on a list
  of criteria\footnote{For instance in the U.K.:
    \url{https://www.gov.uk/discrimination-your-rights}.} which are
  easily found in classifiers input spaces. A service provider risks
  prosecution upon admitting the use of these.

  For instance, features such as age, sex,
  employment, or the status of foreigner are considered as
  discriminatory in the work by Hajian et al. \cite{10.1007/978-3-642-22589-5_20}, that
  looks into the German Credit Dataset, that links bank customer
  features to the accordance or not of a credit.
\item Strategical: the service provider wants to hide the use of some
  features on which its decisions are based. This could be to hide
  some business secret from competitors (because of the accuracy-fairness trade-off \cite{pmlr-v81-menon18a} for instance), or to avoid "reward
  hacking" from users biasing this feature, or to avoid bad press.
\end{itemize}
Conversely, any feature that is not discriminatory is coined \textit{legit}.

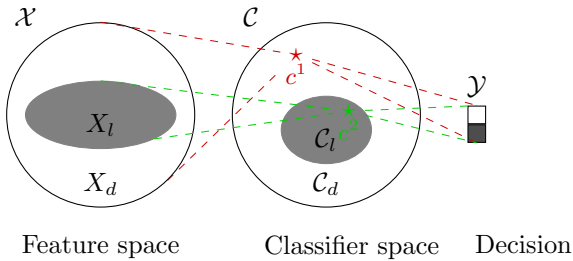
\begin{figure}[t!]
  \begin{tikzpicture}[scale=1]
    \tikzstyle{BC} = [ draw, rectangle, node distance=10pt, minimum
    width=3cm, minimum height=2cm, text width=3cm, align=center, ]
    \tikzstyle{adv} = [ draw, rectangle, node distance=10pt, minimum
    width=3cm, minimum height=0.5cm, text width=3cm, align=center, ]
    \tikzset{edge/.style={ ->,  thick }, }
    \node[ellipse, fill=gray!10, gray, draw, very thin, text height =.4cm, minimum width = 2cm,label={[anchor=south,above=.5mm]270:$X_l$}] at (0,0)  (xl) {};
    \node[ellipse, draw, text height =1.5cm, minimum width = 2.5cm,label={[anchor=south,above=.5mm]270:$X_d$}] at (0,0)  (x) {};
    \node at (-1,1.3) {$\mathcal{X}$};

\node[yshift=-5em]{Feature space};

    \node[ellipse,  fill=gray!10, gray, draw, very thin, text height =.4cm, minimum width = 1.2cm,label={[anchor=south,above=.5mm]270:$\mathcal{C}_l$}] at (3,-.2)  (semi) {};
    \node[ellipse, draw, text height =1.5cm, minimum width = 2.5cm,label={[anchor=south,above=.5mm]270:$\mathcal{C}_d$}] at (3,0)  (semi) {};
    \node at (2,1.3) {$\mathcal{C}$};

\node[xshift=9.5em,yshift=-5em]{Classifier space};
    
    \node[rectangle,draw,fill=white,label=$\mathcal{Y}$] (yes) at (5,0) {};
    \node[rectangle,draw,fill=black!70,anchor=north] (no) at (yes.south) {};

\node[xshift=16em,yshift=-4.9em]{Decision};
    
    \node[red!80!black, label={[red!80!black,below=.2cm] \small $c^1$}]
    (ca) at (2.6,.8)  {$\star$};
        \node[green!80!black, label={[green!80!black,below=.2cm] \small $c^2$}]
    (cb) at (3.3,.05)  {$\star$};

    \draw[dashed,very thin, red!80!black] (x.south east) -- (ca) --
    (no.south);
    \draw[dashed,very thin, red!80!black] (x.north) -- (ca) --
    (yes.north);

    \draw[dashed,very thin, green!80!black] (xl.south east) -- (cb) --
    (no.south);
    \draw[dashed,very thin, green!80!black] (xl.north) -- (cb) --
    (yes.north);
  \end{tikzpicture}
  \caption{Illustration of our model: we consider binary classifiers, that
    map the input domain $\mathcal{X}$ to labels
    $\mathcal{Y}=\{0,1\}$. Some dimensions of the input space are
    discriminative $X_d$, which induces a partition on the classifier
    space. Legitimate classifiers $\mathcal{C}_l$ that do not rely on
    discriminative features to issue a label (in green), while others
    (that is, $\mathcal{C}_d$) can rely on any feature (in red).}
  \label{f:space}
\end{figure}

Formally, we partition the classifier input space $\mathcal{X}$ along
these two types of features: legitimate features $X_l$ that the model
can legitimately exploit to issue a decision, and discriminative
features $X_d$ (please refer to Figure \ref{f:space}).
In other words $\mathcal{X}=(X_l,X_d)$, and any input
$x\in \mathcal{X}$ can be decomposed as a pair of legitimate and
discriminatory features $x=(x_l,x_d)$. \erwan{We stress that the introduction of such a split in the features is required to build our proof and studies, yet it does not constitute a novel proposal in any way.} We assume the input contains at
least one legitimate feature: $X_l\neq \emptyset$.

We also partition the classifier space accordingly: let $\mathcal{C}_l
\subset \mathcal{C}$ the space of legitimate classifiers (among all
classifiers $\mathcal{C}$), which do not rely on any feature of $X_d$
to issue a decision. More precisely, we consider that a classifier is
legitimate if and only if arbitrarily changing any discriminative
input feature does never change its decision: $$C \in \mathcal{C}_l
\Leftrightarrow \forall x_l \in X_l, \forall x_d,x_d' \in
\mathcal{X}_d^2, C((x_l,x_d))=C((x_l,x_d')).$$  Observe that therefore,
any legitimate classifier $C_l$ could simply be defined over input
subspace $X_l\subset \mathcal{X}$. As a slight notation abuse to stress that the
value of discriminative features does not matter in this legitimate
context, we write $C((x_l,\emptyset))$, or $C(x \in X_l)$ as the
decision produced regardless of any discriminative feature.  It
follows that the space of discriminative classifiers complements the
space of legitimate classifiers: $\mathcal{C}_d = \mathcal{C}\setminus
\mathcal{C}_l$.

We can now reframe the main research question we address: \textit{Given a set
of discriminative features $X_d$, and a classifier $C$, can we decide
if $C \in \mathcal{C}_d$, in the remote black-box interaction model ?}

\paragraph{The remote black-box interaction model}
We question the \emph{remote black-box interaction} model (see \eg paper \cite{Tramer:2016:SML:3241094.3241142}),
where the classifier is exposed to users through a remote API. In
other words users can \emph{only} query the classifier model with an input and
obtain a label as an answer (\eg 0 or 1). In this remote setup, users
then cannot collect any specific information about the internals
of the classifier model, such as its architecture, its weights, or its
training data. This corresponds to a security threat model where two
parties are interacting with each other (the user and the remote
service), and where the remote model is implemented on a server,
belonging to the service operator, that is untrusted by the user.

\subsection{Requirements for Remote Explainability}
Explainability is often presented as a solution to increase the
acceptance of AI \cite{adadi2018peeking}, and to potentially prevent discriminative AI
behaviour. Let us expose the logic behind this connection.

\paragraph{Explanations using conditional reasoning}
First, we need to define what is an explanation, to go beyond Miller's definition as
an ``answer to a why-question'' \cite{DBLP:journals/corr/Miller17a}. %
Since the topic of explainability is becoming a hot research field
with (to the best of our knowledge) no consensus on a more technical
definition of an explanation, we will propose for the sake of our
demonstration that an explanation is \erwan{causally coherent, with
  respect to the \emph{modus ponens} %
  rule from deductive reasoning
(it stands for ``if A is implying B, and A being true, B is true as
  well''}) \cite{ponens}.  For instance, if explanation $a$ explains
decision $b$, it means that in context $a$, the decision produced will
necessarily be $b$.  In this light, we first directly observe the
beneficial effect of such explanations on our parallel to club
bouncing: while refusing someone, the bouncer may provide him with the
reasons of that rejection; the person can then correct their behaviour
in order to be accepted on next attempt.

\erwan{Second, this modus ponens explanation form is also sufficient
  to prove non-discrimination. For instance, if $a$ does not involve
  discriminating arguments (which can be checked by the user as $a$ is
  a sentence), and $a\Rightarrow b$, then decision $b$ is not
  discriminative in case $a$.} On the contrary, if $a$ does involve
discriminating arguments, then decision $b$ is taken on a
discriminative basis, and is therefore a discriminative decision. In
other words, this property of an explanation is enough to reveal
discrimination.

To sum up, any explanation framework that behaves ``logically'' (\ie
fits the modus ponens \cite{ponens}) --which is in our view a rather
mild assumption-- is enough to establish the discriminative basis of a
decision. We believe this is the rationale of the statement
"transparency can improve users trust in AI systems".
\gilles{In fact, this logical behaviour is not only sufficient
  to establish discrimination, it is also necessary: assume a
  framework providing explanation $a$ for decision $b$ such that
  we do not have $a\Rightarrow b$. Since $a$ and $b$ are not connected
  anymore, $a$ does not bring any information about $b$.}

\gilles{While this logical behaviour is desirable for users,
  unfortunately in a remote context they cannot check whether
  $a\Rightarrow b$ is in general true because they are only provided
  with a particular explanation $a$ leading to a particular decision
  $b$. They cannot check that $a$ being true leads \emph{in all
    contexts} to $b$ being true. }

\paragraph{Requirements on the user side for checking explanations}
\gilles{In a nutshell, a user in a remote interaction can verify that
  in her context $a$ is true, and $b$ is true, which is compatible
  with the $a\Rightarrow b$ relation of an explanation fitting the
  modus ponens. Let us formalise what can a user check regarding the
  explanation she collects.}  A user that queries a classifier $C$
with an input $x$ gets two elements: the decision (inferred class)
$y=C(x)$ and an explanation $a$ such that $a$ explains $y$. Formally,
upon request $x$, a user collects $y$ and $a = exp_C(y,x)$.

We assume that such a user can check that $a$ is \emph{apropos} \erwan{(\ie appropriate)}: $a$
corresponds to her input $x$. Formally, we write $a\in A(x)$. This
allows us to formally write a non-discriminatory explanation as
$a\in A(x_l)$. This forbids lying by explaining an input that is
different than $x$.

We also assume that the user can check the explanation is
\emph{consequent}: user can check that $a$ is compatible with
$y$. This forbids crafting explanations that are incoherent w.r.t. the
decision (like a bouncer that would explain why you can enter in while
leaving the door locked).

To produce such explanations, we assume the existence of an
explanation framework $exp_C$ producing explanations for classifier
$C$ (this could for instance by the LIME framework \cite{lime}). The
explanation $a$ explaining decision $y$ in context $x$ by classifier
$C$ is written $a=exp_C(y,x)$.
 
Having defined the considered model for exposing our results,
  we stress that this model aims at constraining the provider as much
  as possible (\eg explanations must be as complete as possible, are
  always provided, must always be coherent with decision, etc.). The
  intuition being that if we prove the possibility of malicious
  explanations in this constrained case, then the implementation in
  all less constrained cases, such as for incomplete explanation
  \cite{andreou2018ndss}, \gilles{or example-based explanations} will
  only be easier.

  \gilles{To sum up on the explanation model: explanations fitting the
    modus ponens allow users to detect discrimination. Unfortunately
    in a remote context, users cannot check whether explanations do
    fit the modus ponens. However they can check the veracity of the
    explanation and the decision in their particular experience. This
    is the space we exploit for our attack, by generating malicious
    explanations that appear correct to the user (yet that do not fit
    the modus ponens).}

\subsection{Limits of Remote Explainability: The PR (Public Relations) Attack}

We articulate our demonstration of the limits of explainability in a
remote setup by showing that a malicious service provider can hide the
use of discriminating features for issuing its decisions, while
conforming to the mild explainability framework we described in
the previous subSection.

Such a malicious provider thus wants to \emph{i)} produce decisions
based on discriminative features and to \emph{ii)} produce
non-discriminatory explanations to avoid prosecution.

\gilles{A first approach could be to manipulate the explanation
  directly. It might however be difficult to do so while keeping the
  explanation convincing and true in an automated way. In this paper,
  we follow another strategy that instead consists in inventing a
  legitimate classifier that will then be explained.}

\paragraph{A Generic Attack Against Remote Explainability}
We coin this attack the \emph{Public Relations attack} (noted PR). The idea is
rather simple: upon reception of an input $x$, first compute
discriminative decision $C(x)$. Then train a surrogate model
$C'$ that is non-discriminative, and such that $C'(x)=y$.  Explain
$C'(x)$, and return this explanation along with $C(x)$.

\begin{figure*}[h]
  \begin{tikzpicture}[scale=1]
    \tikzstyle{BC} = [ draw, rectangle, node distance=10pt, minimum
    width=3cm, minimum height=2cm, text width=3cm, align=center, ]
    \tikzstyle{adv} = [ draw, rectangle, node distance=10pt, minimum
    width=3cm, minimum height=0.5cm, text width=3cm, align=center, ]
    \tikzset{edge/.style={ ->,  thick }, }
  
    \node[BC] (0,0)
    {{$x=(x_l,\emptyset)\:\stackrel{C}{\rightarrow}\:y$}\\
      [1em]\Large{Remote}};

    \draw[edge] (-0.5,-3) --  node[left] {\parbox{1cm}{\hfill$x$}} ++(0,+2);
    \draw[edge] (+0.5,-1) -- node[right] {\parbox{1cm}{$y,exp_C(y,(x_l,\emptyset))$}}
    ++(0,-2);
    \node[adv] at (0,-3.3) {User};
    \node at (-2,1.5) {\textbf{A.}};

    \begin{scope}[xshift=160]
    \node[BC] (0,0)
    {{$x=(x_l,x_d)\:\stackrel{C}{\rightarrow}\:y$}\\
      [1em]\Large{Remote}};

    \draw[edge] (-0.5,-3) --  node[left] {\parbox{1cm}{\hfill$x$}} ++(0,+2);
    \draw[edge] (+0.5,-1) -- node[right] {\parbox{1cm}{$y,exp_C(y,(x_l,\textcolor{red}{x_d}))$}}
    ++(0,-2);
    \node[adv] at (0,-3.3) {Discriminated User};
    \node at (-2,1.5) {\textbf{B.}}; 
  \end{scope}

    \begin{scope}[xshift=320]
      \node[BC] (0,0)
      {{$(x_l,x_d)\:\stackrel{C}{\rightarrow}\:y$}\\
          {{\scriptsize $\text{PR}(C,(x_l,x_d),y) \rightarrow \textcolor{green}{C'}$}\\
        s.t. $C'(x_l)=y$}};  
      
      \draw[edge] (-0.5,-3) --  node[left] {\parbox{1cm}{\hfill$x$}} ++(0,+2);
      \draw[edge] (+0.5,-1) -- node[right] {\parbox{1cm}{$y,exp_{\textcolor{green}{C'}}(y,(x_l,\emptyset))$}}
      ++(0,-2);	
      \node[adv] at (0,-3.39) {Discriminated \& Fooled User};       
    \node at (-2,1.5) {\textbf{C.}}; 
    \end{scope}

  \end{tikzpicture}
  \caption{(\textbf{A.}) %
    A provider using a model that does not leverage discriminatory features.
    (\textbf{B.}) A discriminative model divulges its use of a discriminating feature.
    (\textbf{C.}) The PR attack
    principle, undermining remote explainability: the black-box \erwan{builds a surrogate model} $C'$ for each new request $x$,
    that decides $y$ based on $x_l$ features only. It explains $y$ using $C'$.}
  \label{fig:attack}
\end{figure*}
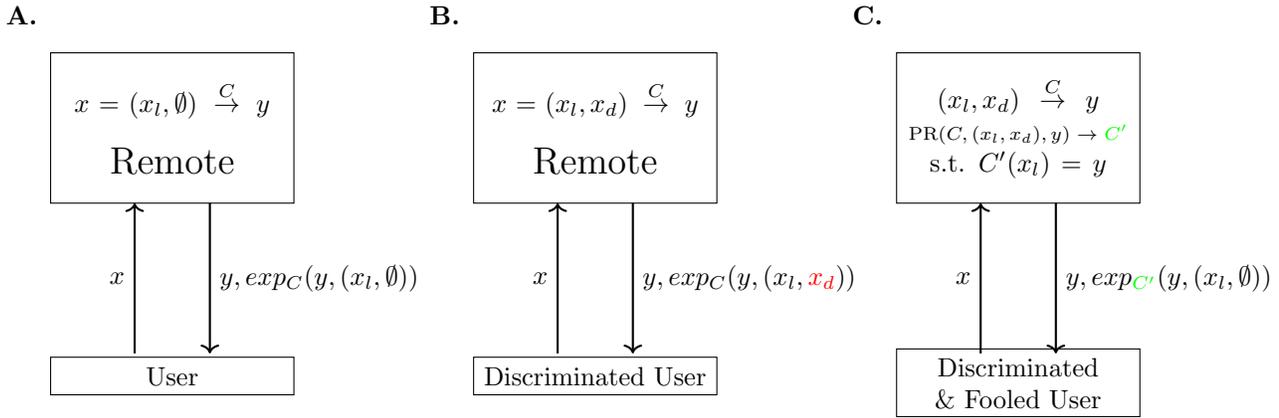

Figure~\ref{fig:attack} illustrates a decision based solely on legitimate features %
(\textbf{A.}), a provider giving an explanation that includes discriminatory features (\textbf{B.}), and the attack by a
malicious provider (\textbf{C.}).
In all three scenarios, a user is
querying a remote service with inputs $x$, and obtaining decisions
$y$ each along with an explanation.
In case \textbf{B.}, the explanation $exp_C$ reveals the use of
discriminative features $X_d$; this provider is prone to
complaints. To avoid these, the malicious provider (\textbf{C.}) leverages the PR attack, by first
computing $C(x)$ using its discriminative classifier $C$. Then, based on the
legitimate features $x_l$ of the input, and its final (discriminative) decision $y$,
it derives a classifier $C'$ for the explanation. %

Core to the attack is the ability to derive such classifier $C'$:
\begin{definition}[PR attack]
  Given an arbitrary classifier $C\in \mathcal{C}_d$, a PR attack is a
  function that finds for an arbitrary input $x$ a classifier $C'$:
  \begin{equation}
    \text{PR}(C,x,C(x)) \rightarrow C',
    \label{PR}
  \end{equation}
such that $C'$ satisfies two properties:
  \begin{itemize}
  \item \textbf{coherence:} $C'(x_l)=y$.
  \item \textbf{legitimacy:} $C'\in \mathcal{C}_l$.
  \end{itemize}
\end{definition}
\gilles{Informally, coherence ensures that
  the explanation (derived from $C'$) appears consequent to the user
  observing decision $y$, while legitimacy ensures the explanation
  will appear to the user as originating from the "modus ponens" explanation of a
  non-discriminating classifier. }

\textbf{Effectiveness of the attack:} \gilles{Let us consider the
  perspective of a user whom upon request $x$ collects a $y$ answer along
  with an explanation $a$}. Observe that  $a=exp_{C'}(y,x)$
is apropos since it directly involves $x: a \in A(x)$. Since we have
$C'(x)=y$ it is also consequent. Finally, observe that since $C'\in \mathcal{C}_l$,
then $a\in A(x_l)$: $a$ is non-discriminatory. \gilles{So from the user
  perspective, she collects an apropos and consequent explanation that
could originate from the logical explanation of a legitimate classifier.} %

\textbf{Existence of the attack:}
We note that crafting a classifier $C'$ satisfying the first property
is trivial since it only involves a single data point $x$.
An example solution is the Dirac delta function of the form: 
\[   
C'(x') =C'((x'_l,x'_d))= 
     \begin{cases}
       \delta_{x'_l,x_l} &\quad\text{if } y=1\\
       1- \delta_{x'_l,x_l} &\quad \text{if } y=0 \\
     \end{cases},
   \]
   \noindent where $\delta$ is the Dirac delta function.  \gilles{   Informally,
     this solution corresponds to defining the classifier that would
     only answer "bounce" to this specific input $x$, and
     answer "enter" to any other input. 
   }
   
   \gilles{While a corresponding intuitive explanation could be
     "because it is specifically you", explaining this very
     specific function might not fit any explainability
     framework. To alleviate this concern, we provide an
     example implementation of a PR attack that produces legitimate decision
     trees from discriminating ones in section \ref{s:dt}.}

   \gilles{Dirac here only constitutes an example
     proving the existence  of PR attack functions. It is important 
   to realise that many such $C'$ PR attack functions exists: 
   any function $X_l\mapsto \mathcal{Y}$ that satisfies one easy
   condition: $C'(x)=y$. }

 \begin{lem}
   Let $\mathcal{C}_l:X_l\mapsto \mathcal{Y}=\{0,1\}$ the set of all possible
   legit classifiers. Let $\mathcal{PR} \subset \mathcal{C}_l$ be the set of
   possible PR attack functions. We have $|\mathcal{PR}|=|\mathcal{C}_l|/2$:
   half of all possible legit classifiers are PR attack functions.
 \end{lem}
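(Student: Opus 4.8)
The plan is to exhibit an explicit fixed-point-free involution on $\mathcal{C}_l$ that swaps the PR attack functions with their complement in $\mathcal{C}_l$, so that the two halves have equal cardinality. First I would recall that, by the legitimacy characterisation given above, $\mathcal{C}_l$ is precisely the set of all functions $X_l \mapsto \{0,1\}$: the discriminative coordinates are irrelevant, so there is no hidden structural constraint. Fixing the input $x=(x_l,x_d)$ and the target decision $y=C(x)$ of the attack, the legitimacy requirement $C'\in\mathcal{C}_l$ is automatic for any candidate map $X_l\mapsto\mathcal{Y}$, and the only remaining condition defining $\mathcal{PR}$ is coherence, i.e. the single-point constraint $C'(x_l)=y$. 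Hence $\mathcal{PR}=\{C'\in\mathcal{C}_l : C'(x_l)=y\}$.

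Next I would define $\phi:\mathcal{C}_l\to\mathcal{C}_l$ by letting $\phi(C')$ agree with $C'$ on every input of $X_l$ except $x_l$, where $\phi(C')(x_l)=1-C'(x_l)$. Then I would verify the three properties that make the counting work: (i) $\phi(C')$ is again a function $X_l\mapsto\{0,1\}$, hence lies in $\mathcal{C}_l$; (ii) $\phi$ is an involution, $\phi\circ\phi=\mathrm{id}$, and in particular a bijection of $\mathcal{C}_l$; (iii) $\phi$ has no fixed point, since it always changes the value at $x_l$. Consequently $\phi$ restricts to a bijection between $\mathcal{PR}=\{C':C'(x_l)=y\}$ and its complement $\{C':C'(x_l)=1-y\}$ inside $\mathcal{C}_l$.

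Finally I would conclude: these two sets are disjoint, their union is all of $\mathcal{C}_l$, and $\phi$ puts them in bijection, so each has cardinality $|\mathcal{C}_l|/2$; in particular $|\mathcal{PR}|=|\mathcal{C}_l|/2$. I expect no real obstacle: the statement is essentially the remark that pinning one coordinate of a Boolean cube cuts it in half. The only points worth stating carefully are that legitimacy imposes nothing beyond ``being a function on $X_l$'' (so that $\mathcal{PR}$ really is carved out by the single coherence equation, the Dirac example above being one concrete member), and that, to make the claim meaningful when $X_l$ is infinite, the argument proceeds through the explicit bijection $\phi$ rather than literal division, so ``$|\mathcal{C}_l|/2$'' should be read as ``equinumerous with each block of the canonical two-block partition of $\mathcal{C}_l$''.
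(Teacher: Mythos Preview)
Your proof is correct and follows essentially the same approach as the paper: partition $\mathcal{C}_l$ according to the value taken at $x_l$, exhibit an explicit involution swapping the two blocks, and conclude they have equal size. The only cosmetic difference is the choice of involution---the paper uses the global complement $C'\mapsto 1-C'$ while you flip just the single coordinate at $x_l$---but both are fixed-point-free involutions interchanging $\mathcal{PR}$ with its complement, so the arguments are interchangeable.
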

 \begin{proof}
   Pick $x_l\in X_l$ and $y=C(x)$ the decision with which our PR
   attack function must be coherent. Since $\mathcal{C}_l$ is a set
   of functions defined over $X_l$, any particular function $C$ in
   $\mathcal{C}_l$ is defined at $x_l$. Let us partition the function
   space $\mathcal{C}_l$ according to the value these functions take
   at $x_l$: let $\mathcal{A}:\{C \in \mathcal{C}_l $ s.t. $C(x_l) = y \}$ and
   $\mathcal{B}:\{C \in \mathcal{C}_l $ s.t. $C(x_l) = \bar{y} \}$. We have
   $\mathcal{C}_l=\mathcal{A}\cup \mathcal{B}$.

   Let $not:\mathcal{A} \mapsto \mathcal{B}$ be a "negation function" that associate for each
   function $C\in \mathcal{A}$ its negation $not(C) \in \mathcal{B}$
   s.t. $not(C)(x)=1-C(x)$.  Observe that $not \circ not=Id$: $not$ defines a
   bijection between $\mathcal{A}$ and $\mathcal{B}$ (any function in $A$ has exactly one
   unique corresponding function in $\mathcal{B}$ and vice versa). Since $not$ is a bijection,
   we deduce $|\mathcal{A}|=|\mathcal{B}|=|\mathcal{C}_l|/2$.

   Since $\mathcal{A}$ contains all possible legitimate functions
   ($\mathcal{A} \subset
   \mathcal{C}_l$) that are coherent with $C(x_l)=y$, $\mathcal{A}=\mathcal{PR}$. Thus $|\mathcal{PR}|=|\mathcal{C}_l|/2$
 \end{proof}
 
 \gilles{In other words, PR attack functions are easy to find: if one
   could sample $\mathcal{C}_l$ uniformly at random, since $C'(x)=y$
   is equally likely than $C'(x)=\bar{y}$, each sample would yield a
   PR attack function with probability $1/2$. }

We have presented the framework and an attack necessary to question
the possibility of remote explainability. We next discuss the
possibility for a user to spot that an explanation is malicious and obtained by a
PR attack. We stress that if a user cannot, then the very concept of
remote explainability is at stake.

\section{The bouncer problem: spotting PR attacks}
\label{s:bouncer}

We presented in the previous section a general setup for remote
explainability.  We now formalise our research question regarding the
possibility of a user to spot an attack in that setup.

\begin{definition}[The bouncer problem ($BP$)]
  Using $\epsilon$ requests that each returns a decision $y_i=C(x_i)$ and an explanation $exp_C(y_i,x)$, we denote by $BP(\epsilon)$, decide if $C \in \mathcal{C}_d$. %
\end{definition}

\subsection{An Impossibility Result for One-Shot Explanations}

We already know that using a single input point is insufficient:
\begin{obs}
  $BP(1)$ has no solution.
\end{obs}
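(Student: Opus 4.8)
The plan is to show that a single query is too weak to ever separate $\mathcal{C}_l$ from $\mathcal{C}_d$, by exhibiting two provider behaviours that produce identical one-shot observations yet sit on opposite sides of the question. First I would pin down what a putative $BP(1)$ solver can see: upon issuing one query $x=(x_l,x_d)$ it receives exactly the pair $(y,a)$ with $y$ the returned label and $a=exp(y,x)$ the returned explanation (which, by the requirements of Section~\ref{s:model}, it can verify is apropos and consequent), and it must output ``$C\in\mathcal{C}_d$'' or ``$C\in\mathcal{C}_l$'' correctly for \emph{every} underlying system. A solver is thus a map from observations $(x,y,a)$ to $\{0,1\}$.

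Next I would build the indistinguishable pair, which is essentially an immediate corollary of the existence of PR attack functions. Assuming $X_d\neq\emptyset$, pick any genuinely discriminative $C\in\mathcal{C}_d$ and any input $x=(x_l,x_d)$, and set $y=C(x)$. By the existence of PR attacks (\eg the Dirac delta construction), there is a legit classifier $C'=\text{PR}(C,x,y)\in\mathcal{C}_l$ with $C'(x_l)=y$. Consider two scenarios. In scenario (i) the remote system runs the honest legit classifier $C'$: on query $x$ it returns $C'(x)=C'(x_l)=y$ together with $a=exp_{C'}(y,x)$. In scenario (ii) the remote system runs $C$ but performs the PR attack: on query $x$ it returns the discriminative decision $C(x)=y$ together with the \emph{same} explanation $a=exp_{C'}(y,x)$. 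The user's observation $(x,y,a)$ is byte-for-byte identical in both scenarios, yet in (i) the decision function lies in $\mathcal{C}_l$ while in (ii) it lies in $\mathcal{C}_d$.

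Finally I would conclude: any purported $BP(1)$ solver, seeing only $(x,y,a)$, must return the same verdict in scenarios (i) and (ii), hence is wrong in at least one of them; since $C$ and $x$ were arbitrary, no solver can be correct on all inputs, so $BP(1)$ has no solution. I expect the only delicate point to be the book-keeping around the definition of $BP$ — namely making explicit that the problem asks about the system's \emph{decision} function, which the PR attack leaves untouched while merely substituting the explanation, so that scenario (ii) genuinely witnesses a member of $\mathcal{C}_d$ — rather than anything technical; once the model of Section~\ref{s:model} and the PR attack are in hand, the argument is short.
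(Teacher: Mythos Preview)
Your proposal is correct and follows essentially the same approach as the paper: both argue that the existence of the PR attack (instantiated via the Dirac construction) makes a single observation $(x,y,a)$ indistinguishable between an honest legitimate classifier $C'\in\mathcal{C}_l$ and a discriminative classifier $C\in\mathcal{C}_d$ that returns $C(x)$ together with $exp_{C'}(y,x)$. The paper's proof is a one-line appeal to the Dirac construction, with the indistinguishability reasoning relegated to the surrounding prose; you have simply made that two-scenario argument explicit and formal, which is a presentational improvement rather than a different route.
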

  \begin{proof}
    The Dirac construction above always exists.
  \end{proof}

Indeed, constructions like the introduced Dirac function, or the tree
pruning construct a PR attack that produces explainable decisions. Given a
single explanation on model $C'$ (\ie $\epsilon=1$) the user cannot distinguish between
the use of a model ($C$ in case \textbf{A.}), or the one of a
crafted model by a PR attack ($C'$ in case \textbf{C.}), since it is consequent.  This
means that such a user cannot spot the use of hidden discriminatory
features due to the PR attack by the malicious provider.

We observed that a user cannot spot a PR attack, with $BP(1)$.  This is
already problematic, as it gives a formal proof on why Facebook
ad explanation system cannot be trusted \cite{andreou2018ndss}.

\subsection{The Hardness of Multiple Queries for Explanation}

To address the case $BP(\epsilon>1)$, we observe that a PR attack
generates a new model $C'$ for each request; in consequence, an
approach to detect that attack is to detect the impossibility (using
multiples queries) of a \emph{single} model $C'$ to produce coherent
explanations for a set of observed decisions. We here study this approach.

Interestingly, classifiers and bouncers share this property that their
outputs are all mutually exclusive (each input is mapped to exactly one
class). Thus we have $Enter \Rightarrow \overline{Bounce}$ (with $Enter$ and $Bounce$ the positive or negative decision to for instance enter a place). In which case it is
impossible to have $a\Rightarrow Enter $ \emph{and} $a\Rightarrow
Bounce $. \gilles{Note that this relation
  assumes a "logical" explainer. On a non logical explainer, since we
  cannot say $a\Rightarrow Enter$ given $a$ and $Enter$, we cannot
  detect such attack.}
Note \gilles{also} that
non-mutually exclusive outputs \gilles{(e.g. in the case of
  recommenders where recommending item $a$ does not imply not recommending item $b$)} are not bound by this rule.

A potential problem for the PR attack is a decision conflict, in which
$a$ could explain both $b$ and $\bar{b}$ its opposite.  For instance,
imagine a bouncer refusing you the entrance of a club because, say,
you have white shoes. Then, if the bouncer is coherent, he should
refuse the entrance to anyone wearing white shoes, and if you witness
someone entering with white shoes, you could argue against the lack of
coherence of the bouncer decisions. We build on those incoherences to
spot PR attacks.

In order to examine the case $BP(\epsilon)$, where $\epsilon>1$, we
first define the notion of an \textit{incoherent pair}:

  \begin{definition}[Incoherent Pair -- IP]
  Let
  $x^1=(x^1_l,x^1_d),x^2=(x^2_l,x^2_d) \in
  \mathcal{X}=X_l\times X_d$ be a two input points in
  the feature space. $x^1$ and $x^2$ form an incoherent pair for
  classifier $C$ iff they both have the same legit feature values in $X_l$ and yet
  end up being classified differently:

$x^1_l=x^2_l \wedge C(x_1)\neq C(x_2)$. For convenience we write
$(x^1,x^2)\in IP_C$.
\end{definition}

Finding such an IP is a powerful proof of PR attack on the model by
the provider. Intuitively, this is a formalization of an
  intuitive reasoning: "if you let others enter with
white shoes then this was not the true reason for my rejection":
\begin{lem}
  Only decisions resulting from a model crafted by a PR attack (\ref{PR}) can exhibit incoherent pairs:
  $ IP_C \neq \emptyset \Rightarrow C \in \mathcal{C}_d$.
\end{lem}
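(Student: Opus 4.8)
The plan is to observe that the statement is nothing more than the contrapositive of the defining characterization of a legitimate classifier. Recall that we set $C\in\mathcal{C}_l$ if and only if $\forall x_l\in X_l,\ \forall x_d,x_d'\in X_d,\ C((x_l,x_d))=C((x_l,x_d'))$. So the strategy is: assume $IP_C\neq\emptyset$, extract from an incoherent pair a concrete witness that falsifies this universally quantified condition, conclude $C\notin\mathcal{C}_l$, and finally invoke $\mathcal{C}_d=\mathcal{C}\setminus\mathcal{C}_l$ to obtain $C\in\mathcal{C}_d$.

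Concretely, I would argue as follows. Let $(x^1,x^2)\in IP_C$, so by the definition of an incoherent pair we have $x^1_l=x^2_l$ and $C(x^1)\neq C(x^2)$. Set $x_l:=x^1_l=x^2_l$, $x_d:=x^1_d$ and $x_d':=x^2_d$, so that $x^1=(x_l,x_d)$ and $x^2=(x_l,x_d')$. Then $C((x_l,x_d))\neq C((x_l,x_d'))$, which is precisely a violation of the membership condition for $\mathcal{C}_l$ (the universal statement fails at this particular $x_l$ and this particular pair $x_d,x_d'$). Hence $C\notin\mathcal{C}_l$, and since the legitimate and discriminative classifier spaces partition $\mathcal{C}$, we get $C\in\mathcal{C}_d$, as claimed.

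The only point that deserves to be spelled out carefully — rather than being a real mathematical obstacle — is the reading of the symbol $C$ in the remote/PR setting. A PR attack returns, for each query, a surrogate $C'$ that by its \emph{legitimacy} property lies in $\mathcal{C}_l$ and therefore individually can never exhibit an incoherent pair; the object that does exhibit one is the family of \emph{decisions} actually served, which are the outputs of the underlying discriminative model. So the lemma should be understood as a statement about the effective decision function $C$ realized by the service: if the decisions it has produced on two inputs sharing all their legitimate coordinates ever disagree, then that $C$ cannot be in $\mathcal{C}_l$, hence the provider is relying on discriminative features. Beyond making this interpretation explicit, I expect no difficulty; the mathematical content is immediate from the definitions of $IP_C$ and of $\mathcal{C}_l$.
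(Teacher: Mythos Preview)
Your proposal is correct and follows essentially the same approach as the paper: both arguments amount to unpacking the definitions of $IP_C$ and $\mathcal{C}_l$ and observing that an incoherent pair is precisely a witness against membership in $\mathcal{C}_l$. The paper phrases it as the contrapositive (assume $C\in\mathcal{C}_l$, then show $IP_C=\emptyset$ by contradiction), while you give the direct form, but the mathematical content is identical.
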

\begin{proof}
  We prove the contra-positive form
  $ C \not\in \mathcal{C}_d \Rightarrow IP_C = \emptyset $. Let
  $C\not\in \mathcal{C}_d$. Therefore $C \in \mathcal{C}_l$, and by
  definition:
  $ \forall x_l \in X_l, \forall x_d,x_d' \in \mathcal{X}_d^2,
  C((x_l,x_d))=C((x_l,x_d'))$. By contradiction assume $IP_C \neq
  \emptyset$. Let $(x^1,x^2) \in  IP_C: x^1_l=x^2_l \wedge C(x_1)\neq
  C(x_2)$. This directly contradicts $C \in \mathcal{C}_l$. Thus
  $IP_C= \emptyset$.
\end{proof}

We can show that there is always a pair of inputs
allowing to detect a discriminative classifier $C \in \mathcal{C}_d$.

\begin{lem}
  A classifier $C'$, resulting from a PR attack, always has at least one incoherent pair:
  $C' \in \mathcal{C}_d \Rightarrow IP_C \neq \emptyset$.
\end{lem}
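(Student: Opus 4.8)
The plan is to obtain this statement as the exact converse of the preceding proposition, by simply unfolding and negating the definition of a legitimate classifier. The first thing to make precise is which decision function the incoherent pair is sought on: a PR attack leaves the decisions returned to users untouched --- they are $C(x)$ for the underlying discriminative classifier $C$ --- and only substitutes the surrogate $C'$ inside the explanation. So although every individual surrogate satisfies $C'\in\mathcal{C}_l$ by the legitimacy requirement, the decision function actually witnessed across queries is $C\in\mathcal{C}_d$, and it is $IP_C$ that we must show to be non-empty; this is the (mild) notational reconciliation hidden behind the symbol $C'$ in the statement.

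Next I would recall the characterisation $C\in\mathcal{C}_l \Leftrightarrow \forall x_l\in X_l,\ \forall x_d,x_d'\in\mathcal{X}_d,\ C((x_l,x_d))=C((x_l,x_d'))$, and invoke the hypothesis $C\in\mathcal{C}_d=\mathcal{C}\setminus\mathcal{C}_l$, i.e. $C\notin\mathcal{C}_l$. Negating the universally quantified statement yields witnesses $x_l\in X_l$ and $x_d,x_d'\in\mathcal{X}_d$ with $C((x_l,x_d))\neq C((x_l,x_d'))$. (Incidentally this also forces $\mathcal{X}_d\neq\emptyset$, which is consistent with the non-emptiness of $\mathcal{C}_d$ required by the hypothesis, so there is no degenerate corner case to rule out.)

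Setting $x^1:=(x_l,x_d)$ and $x^2:=(x_l,x_d')$, we then have $x^1_l=x^2_l$ while $C(x^1)\neq C(x^2)$, which is precisely the defining condition of an incoherent pair; hence $(x^1,x^2)\in IP_C$ and $IP_C\neq\emptyset$, as claimed. I do not expect any genuine obstacle here --- the core of the argument is a one-line negation of a quantifier --- so the only care required is the bookkeeping of the first paragraph identifying the observed decision function with $C$. Combined with the previous proposition ($IP_C\neq\emptyset\Rightarrow C\in\mathcal{C}_d$), this yields the equivalence $C\in\mathcal{C}_d\Leftrightarrow IP_C\neq\emptyset$: incoherent pairs are exactly the observable signature of a discriminative classifier, which is what motivates the study of $BP(\epsilon>1)$ in the remainder of the section.
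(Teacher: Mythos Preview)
Your proposal is correct and mathematically equivalent to the paper's proof: the paper argues the contrapositive ($IP_C=\emptyset\Rightarrow C\notin\mathcal{C}_d$) by building an explicit legitimate classifier $C^l(x_l):=C((x_l,x_\emptyset))$ and showing $C=C^l$, whereas you argue the direct implication by negating the quantifiers in the definition of $\mathcal{C}_l$ to extract the witnesses $x^1,x^2$. Both are the same one-line unpacking of definitions viewed from opposite sides, and your added paragraph disentangling the roles of $C$ and $C'$ is a useful clarification of what is admittedly a notational slip in the statement.
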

\begin{proof}
  We prove the contrapositive form $IP_C = \emptyset \Rightarrow C
  \not\in \mathcal{C}_d$. Informally, the strategy here is to prove
  that if no such pair exists, this means that decisions are not based
  on discriminative features in $X_d$, and thus the provider had no
  interest in conducting a PR attack on the model; the considered
  classifier is not discriminating.

  Assume that $IP_C =  \emptyset$. 
  Let $x_\emptyset\in X_d$, and let $C^l:X_l\mapsto
  \mathcal{Y}$ be a legitimate classifier such that
  $C^l(x_l)=C((x_l,x_\emptyset))$.

  Since $IP_C=\emptyset$, this means that
  $\forall x^1,x^2 \in \mathcal{X}, x^1_l=x^2_l \Rightarrow
  C(x_1)=C(x_2)$. In particular 
  $\forall x\in \mathcal{X},
  C(x=(x_l,x_d))=C^l(x_l,x_\emptyset)$. Thus $C=C^l$; by the
  definition of a PR attack being only applied to a model that uses
  discriminatory features, this leads to
  $C\in \mathcal{C} \setminus \mathcal{C}_d$, \ie
  $C\not\in \mathcal{C}_d$.
\end{proof}

Which directly applies to our problem:
\begin{lem}[Detectability lower bound]
  $BP(\vert  \mathcal{X}\vert)$ is solvable.
\label{lower}
\end{lem}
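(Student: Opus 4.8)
The plan is to show that the budget $\epsilon=\vert\mathcal{X}\vert$ is enough to reconstruct $C$ entirely, and then to invoke the characterisation of discriminative classifiers via incoherent pairs established in the previous two propositions. Concretely, the strategy reduces the problem to an exhaustive test: query every point, look for an incoherent pair, and answer accordingly.

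First I would describe the querying phase. With $\epsilon=\vert\mathcal{X}\vert$ requests, the user can submit each input $x\in\mathcal{X}$ exactly once and record the returned decision $C(x)$ (the accompanying explanations are not even needed for this bound). After this phase the user knows the value of $C$ on the whole domain, i.e.\ the function $C$ is completely determined.

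Next I would run the decision rule. The user scans all pairs $(x^1,x^2)$ of queried points and checks whether any of them forms an incoherent pair, i.e.\ whether $x^1_l=x^2_l$ while $C(x^1)\neq C(x^2)$; equivalently, whether $IP_C\neq\emptyset$. If such a pair is found, the user outputs $C\in\mathcal{C}_d$; this is correct by the proposition $IP_C\neq\emptyset \Rightarrow C\in\mathcal{C}_d$. If no such pair exists, then $IP_C=\emptyset$, and by the contrapositive of the subsequent proposition ($C'\in\mathcal{C}_d \Rightarrow IP_C\neq\emptyset$, i.e.\ $IP_C=\emptyset \Rightarrow C\notin\mathcal{C}_d$) the user may safely output $C\notin\mathcal{C}_d$. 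Since both answers are correct, the procedure solves $BP(\vert\mathcal{X}\vert)$.

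There is essentially no hard step here: the result is an immediate corollary of the equivalence $IP_C\neq\emptyset \Leftrightarrow C\in\mathcal{C}_d$ already proved, combined with the trivial observation that $\vert\mathcal{X}\vert$ queries suffice to observe $C$ everywhere. The only point deserving a remark is that this is purely an \emph{upper bound} on the detection effort (hence "Detectability lower bound" in the sense that at most $\vert\mathcal{X}\vert$ queries are ever needed, matching the intuition that remote detection is no easier than an exhaustive local search); showing that one cannot in general do substantially better is the complementary hardness direction treated afterwards, and is not required for this statement.
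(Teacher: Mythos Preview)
Your proposal is correct and matches the paper's own argument: both rely on the equivalence $IP_C\neq\emptyset \Leftrightarrow C\in\mathcal{C}_d$ established in the two preceding propositions, together with the observation that $\vert\mathcal{X}\vert$ queries suffice to examine every input and hence to detect any existing incoherent pair. The paper's proof is a one-line ``straightforward'' remark invoking exactly this; your version simply spells out both branches of the decision rule explicitly.
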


\begin{proof}
Straightforward:   $C' \in \mathcal{C}_d \Rightarrow IP_C \neq
\emptyset$, and since $IP \subseteq \mathcal{X} \times  \mathcal{X}$ testing the whole input space
will necessarily exhibit such an incoherent pair.
\end{proof}

This last result is rather weakly positive: even though any PR attack
is eventually detectable, in practice it is impossible to exhaustively
explore the input space of modern classifiers due to their dimension.
This remark also further questions the opportunity of remote
explainability.

\gilles{Moreover, it is important to observe that while finding an IP
  proves the presence of a PR attack, it is not an efficient technique
  to prove the absence of a PR attack, which is probably the use case
  interesting users the most. Section~\ref{s:nn} details this approach on
  a concrete dataset.}

\gilles{This concludes the theoretical perspective of this paper. To
  sum up, an explainer that could allow to spot classifier
  discrimination should behave logically, this is what is expected by
  the users. However, they can only check the properties of the
  provided explanation with regards to input $x$, which leaves room
  for malicious providers. One such provider can just "invent" a legit
  explainer whose decision matches the discriminative one for input
  $x$. Fortunately, this technique can be detected. This detection is
  however difficult in practice, as we will illustrate next.}

\section{Illustration and Experiments}
\gilles{In this section, we instantiate concretely some of the points
  raised by our theoretical perspective.  We first illustrate the ease
  of finding PR attack functions on binary decision trees by
  presenting an algorithm that implements a PR attack. We then focus
  on detection, and evaluate in practice the hardness of finding
  incoherent pairs on the German Credit dataset.}

\subsection{Illustration using Decision Trees}
\label{s:dt}

\begin{figure*}[h]
  \begin{tikzpicture}[scale=1]
    \tikzstyle{test} = [ draw, rectangle, minimum
    width=3cm, text width=2cm, align=center, ]
    \tikzstyle{testL} = [ draw, fill=blue!10,rectangle, minimum
    width=3cm, text width=2cm, align=center, ]
    \tikzstyle{testD} = [ draw,color=red, fill=orange!10,rectangle, minimum
    width=3cm, text width=2cm, align=center, ]

    \tikzstyle{labY} = [ draw, circle, fill=green!30, node distance=10pt, minimum
    width=1cm, minimum height=0.5cm, align=center]
    \tikzstyle{labN} = [ draw, circle, fill=red!30, node distance=10pt, minimum
    width=1cm, minimum height=0.5cm, align=center, ]
    \tikzset{edge/.style={ ->,  thick }, };

    \node[testL] (dis) at (0,0) {Disguised ?};
    \node[above, xshift=-3cm] (label) {$C(x):$};
    \node[testD, below= of dis, xshift=-2cm] (age)  {Age $<60$};
    \node[testL, below= of dis, xshift=2cm] (socks)  {Wears pink socks ?};

    \node[labY,below= 1cm of age, xshift=-1cm](e1) {{\scriptsize Enter}};
    \node[labN,below= 1cm of age, xshift=1cm](b1) {{\scriptsize Bounce}};

    \node[labY,below= 1cm of socks, xshift=-1cm](e2) {{\scriptsize Enter}};
    \node[labN,below= 1cm of socks, xshift=1cm](b2) {{\scriptsize Bounce}};

    \draw[->,-{Latex[length=3mm]}] (dis) -- node[above] {Y} (age);
    \draw[->,-{Latex[length=3mm]}] (dis) -- node[above] {N} (socks);

    \draw[->,-{Latex[length=3mm]}] (age) -- node[above,xshift=-2mm] {Y} (e1);
    \draw[->,-{Latex[length=3mm]}] (age) -- node[above,xshift=2mm] {N} (b1);

    \draw[->,-{Latex[length=3mm]}] (socks) -- node[above,xshift=-2mm] {Y} (e2);
    \draw[->,-{Latex[length=3mm]}] (socks) -- node[above,xshift=2mm] {N} (b2);

    \begin{scope}[xshift=240,scale=0.6, every node/.append style={transform shape}]
      \node[testL] (dis) at (0,0) {Disguised ?};
      \node[above, xshift=-3.6cm] (label) {\Large $C'(x_l)|x_d<60:$};
      \node[testL, below= of dis, xshift=2cm] (socks)  {Wears pink socks ?};

    \node[dashed,labY,below= 1cm of dis, xshift=-1cm](e1) {{\scriptsize Enter}};

    \node[labY,below= 1cm of socks, xshift=-1cm](e2) {{\scriptsize Enter}};
    \node[labN,below= 1cm of socks, xshift=1cm](b2) {{\scriptsize Bounce}};

    \draw[->,-{Latex[length=3mm]}] (dis) -- node[above] {N} (socks);
    \draw[->,-{Latex[length=3mm]}] (dis) -- node[above,xshift=-2mm] {Y} (e1);

    \draw[->,-{Latex[length=3mm]}] (socks) -- node[above,xshift=-2mm] {Y} (e2);
    \draw[->,-{Latex[length=3mm]}] (socks) -- node[above,xshift=2mm] {N} (b2);
    \end{scope}

    \begin{scope}[xshift=240,yshift=-100,scale=0.6, every node/.append style={transform shape}]
      \node[testL] (dis) at (0,0) {Disguised ?};
      \node[above, xshift=-3.6cm] (label) {\Large $C'(x_l)|x_d \geq 60:$};     
      \node[testL, below= of dis, xshift=2cm] (socks)  {Wears pink socks ?};      
      \node[dashed,labN,below= 1cm of dis, xshift=-1cm](b1) {{\scriptsize Bounce}};
      
      \node[labY,below= 1cm of socks, xshift=-1cm](e2) {{\scriptsize Enter}};
      \node[labN,below= 1cm of socks, xshift=1cm](b2) {{\scriptsize Bounce}};

    \draw[->,-{Latex[length=3mm]}] (dis) -- node[above] {N} (socks);
    \draw[->,-{Latex[length=3mm]}] (dis) -- node[above,xshift=-2mm] {Y} (b1);

    \draw[->,-{Latex[length=3mm]}] (socks) -- node[above,xshift=-2mm] {Y} (e2);
    \draw[->,-{Latex[length=3mm]}] (socks) -- node[above,xshift=2mm] {N} (b2);
  \end{scope}
    
  \end{tikzpicture}
  \caption{Illustration of a possible implementation
    (Algorithm~\ref{alg:tree}) of the PR attack: instead of having to
    explain the use of a discriminative feature (age in this case) in the
    classifier $C$, two non-discriminative classifiers ($C'$, on the right) are
    derived. Depending on the age feature in the request, a $C'$ is
    then selected to produce legit explanations.  The two dashed
    circles represent an Incoherent Pair, that users might seek to
    detect a malicious explanation.  }
  \label{fig:trees}
\end{figure*}
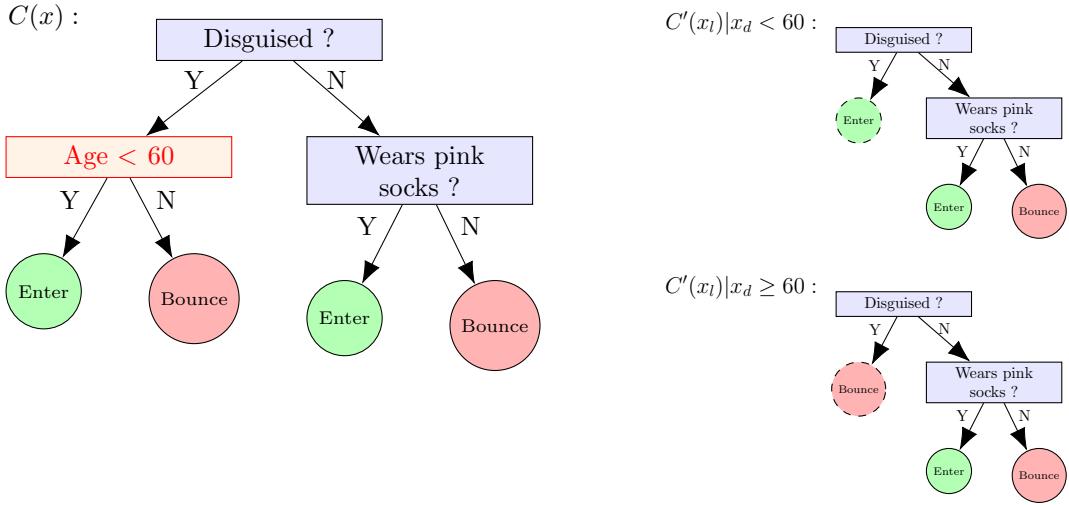

In this section, we embody the previous observations and approaches on
the concrete case of decision tree classifiers. The
choice of decision trees is motivated first because of its recognised
importance (\eg \texttt{C4.5} ranked number one of the top ten data
mining algorithms \cite{wu2008top}). Second, there is a wide consensus
on their explainability, that is straightforward \cite{molnar2019}: a
path in the tree "naturally" lists the attributes considered by the
algorithm to establish a classification. Finally, the simplicity of
crafting PR attacks on those make them good candidates for an
illustration and argues for the practical implementability of such an attack.

We denote $\mathcal{T}$ as the set of tree-based classifiers.  We do
not need any assumption on how the tree is built (\eg \texttt{C4.5}
\cite{quinlan2014c4}). Regarding explainability, we here only need to
assume that decision trees are explainable:
$ \forall C \in \mathcal{T}, exp_C$ exists.

Let $C \in \mathcal{T} \cap \mathcal{C}_d$ be a discriminatory binary
tree classifier. Each internal node $n \in V(C)$ tests incoming
examples based on a feature $n.label$. Each internal node is connected
to exactly two sons in the tree, named $n.r$ and $n.l$ for
\emph{right} and \emph{left}.  Depending on the (binary) result of
this test, the example will continue on either of these paths. We
denote the father of an internal node by $n.father$ (the root node $r$
is the only node such that $r.father=\emptyset$).

Algorithm~\ref{alg:tree} presents a PR attack on binary decision
trees. To ease its presentation, we assume that given an
input $x$, $n.r$ (right) will by convention always be the branch
taken after evaluating $x$ on $n$.
The algorithm starts by initializing the target decision
tree $C'$ as a copy of $C$. Then, it selectively removes all nodes
involving discriminative features, and replaces them with the subtree
the target example $x$ would take.

\begin{algorithm}%
\small
  \KwIn{$C$, $x=(x_l,x_d)$}
$y=C(x)$ \tcp*{Find discriminative decision }
Let $\{n_0,\ldots n_l\}$ be breadth first ordering of the nodes of $C$\;
Let $C'=C$ \tcp*{Initialise surrogate}

\For{node $i=0$ to $l$ }{
  \uIf{$n_i$.label $\in X_d$}{
    $C'.n_i.father.r= n_i.r$ \tcp*{Reconnect $n_i$ father to right son}
    $C'=C'\setminus \{n_i\}$     \tcp*{Remove discriminating node}
    $C'=C'\setminus \{n_i.l $ subtree$ \}$     \tcp*{Remove left subtree}
  }
  \Else{  
    $C'.n_i.l = \bar{y}$ \tcp*{Keep legit node, add dummy terminal node}
  }
  }
\Return $y,exp_{C'}(y,(x_l,\emptyset))$
\caption{\small PR attack on a discriminative decision binary tree $C$}
\label{alg:tree}
\end{algorithm}

To do so, Algorithm~\ref{alg:tree} removes each discriminative node
$n_i$ by connecting $n_{i-1}$ and $n_{i+1}$. While this approach would
be problematic in the general case (we would loose the $n_i.l$
subtree), in the context of $x$ we know the explored branch is
$n_i.r$, so we simply reconnect this branch, and replace the left
subtree by a dummy output.

An example is presented in Figure \ref{fig:trees}: the discriminative
classifier $C$ is queried for the explanation $exp_C(C(x),x)$ of input
$x$. To produce an answer for a discriminative feature such as the
age, it first applies Algorithm \ref{alg:tree} on $C$, given the query
$x$. If $x<60$ (upper right in Figure \ref{fig:trees}), the
explanation $exp_{C'}$ has simply became a node with the age limit,
leading to an ``Enter'' decision. In case $x\geq 60$, the explanation
node is a legit one (``Disguised''), leading to the ``Bounce''
decision. Both explanation then do not exhibit the fact that the
provider relied on a discriminative feature in $C$.  This exhibits
that $BP(1)$ does not have a solution.

\gilles{Comparing both versions of $C'$ easily yields solutions for
  $BP(2)$, for instance $(Disguised,with socks,Age=49)$ and
  $(Disguised,with socks,Age=62)$.}

\begin{lem}
  Algorithm~\ref{alg:tree} implements a PR attack.
\end{lem}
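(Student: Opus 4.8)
The statement to prove is that the classifier $C'$ constructed by Algorithm~\ref{alg:tree} on input $(C,x)$, with $C\in\mathcal{T}\cap\mathcal{C}_d$, is a valid PR attack, \ie it satisfies the two clauses of the PR definition: \textbf{coherence}, $C'(x_l)=y$ with $y=C(x)$, and \textbf{legitimacy}, $C'\in\mathcal{C}_l$. My plan is to first describe the exact shape of the tree $C'$ that the loop leaves behind, and then obtain both clauses as immediate corollaries of that description.

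For the structural part, I would name the root-to-leaf path followed by $x$ in the original tree $C$: call it $p_0,p_1,\dots,p_k$, where $p_0$ is the root, $p_{j+1}$ is the child of $p_j$ taken when evaluating $x$ (which is $p_j.r$ by the stated convention), and $p_k$ is the leaf holding the label $y=C(x)$. The claim is that after the loop, $C'$ is the ``caterpillar'' obtained from this path by deleting every internal $p_j$ whose label lies in $X_d$ (short-circuiting the path through $p_j.r$) and giving every surviving legit internal $p_j$ a fresh leaf labelled $\bar{y}$ as its non-$x$ child, the whole chain terminating at the untouched leaf $p_k$. I would prove this by induction over the breadth-first iteration, with the invariant that once the loop has processed all of $C$'s nodes up to a given depth, $C'$ already has the caterpillar form on that prefix of the path and contains no surviving node labelled in $X_d$ among the processed ones. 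The breadth-first order is exactly what makes this work: a discriminative node is always reached after its surviving ancestors, so reconnecting it to $p_j.r$ is never later invalidated; and every off-path subtree of $C$ is the left subtree of some internal path node $p_j$, hence is either overwritten by a $\bar{y}$ leaf (the \textbf{Else} branch, when $p_j$ is legit) or discarded together with its discriminative root (the \textbf{If} branch), in both cases before the iteration can descend into it.

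Granting the claim, legitimacy is immediate: every internal node remaining in $C'$ is one of the legit $p_j$, so it tests a feature in $X_l$, and therefore no assignment of the discriminative coordinates can change any branch taken; by the definition of $\mathcal{C}_l$ this gives $C'\in\mathcal{C}_l$. Coherence is just as direct: evaluating $C'$ on $x$ (equivalently on $(x_l,\emptyset)$, since $C'$ reads only legit features) walks down the surviving legit $p_j$, at each one taking the same branch $p_j.r$ as in $C$ because the legit coordinates of $x$ are $x_l$, and finishes at the leaf $p_k$ whose label is $y$; hence $C'(x_l)=y$.

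The difficulty I expect is entirely bookkeeping rather than conceptual. One has to fix the semantics of the pointer surgery carefully --- that $C'.n_i.father$ refers to the father in the \emph{current} $C'$, not in $C$, so that chains of consecutive discriminative nodes on the path collapse correctly; that the operations performed when $n_i$ has already been removed (because it sat inside a left subtree deleted earlier) are harmless no-ops; and that the degenerate cases --- the root itself being discriminative, or $C$ consisting of a single leaf --- are accommodated, \eg by reading ``reconnect the father'' as ``make $p_j.r$ the new root'' when $p_j=p_0$. Once the caterpillar invariant is phrased so as to survive these cases, the inductive step and the two deductions above are routine.
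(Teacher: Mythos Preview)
Your argument is correct and follows the same line as the paper's proof: show that no discriminative node survives in $C'$ (legitimacy) and that evaluating $x_l$ in $C'$ traces the surviving legit portion of $x$'s original path down to the leaf labelled $y$ (coherence). The paper's version is far terser---it asserts these two facts directly without your inductive caterpillar invariant---and it adds a third bullet, that $C'$ remains a valid (hence explainable) decision tree, which your structural description makes evident but which you might state explicitly.
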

\begin{proof}
  To prove the statement, we need to prove that:
  \begin{itemize}
  \item $C'$ is legitimate
  \item $C'$ is coherent: $C'(x_l)=y$
  \item $C'$ is explainable
  \end{itemize}

  First, observe that any nodes of $C'$ containing discriminative
  features is removed line 7. \erwan{Thus, $C'$ only takes decisions
  based on features in $X_l$}: $C'$ is legitimate.

  Second, observe that by construction since $x=(x_l,x_d)$, and since
  any discriminative node $n$ is replaced by this right ($n.r$)
  outcome which is the one that corresponds to $x_d$. In other words,
  $\forall x'_l \in X_l, C'(x'_l)=C((x'_l,x_d))$: $C'$
  behaves like $C$ where discriminative features are evaluated at
  $x_d$. This is true in particular for $x_l:
  C'(x_l)=C((x_l,x_d))=C(x)=y$.

  Finally, observe that $C'$ is a valid decision tree. Therefore,
  according to our explainability framework, $C'$ is explainable.
\end{proof}

Interestingly, the presented attack can be efficient as it only
involves pruning part of the target tree. In the worst case, this one
has $\Omega(2^{d})$ elements, but in practice decision trees are
rarely that big.

\subsection{Finding IPs on a Neural Model: the German Credit Dataset}
\label{s:nn}

We now take a closer look at the detectability of the attack, namely:
how difficult is it to spot an IP ?
We illustrate this by experimenting on the German Credit Dataset.

\paragraph{Experimental setup}

We leverage Keras over TensorFlow to learn a neural network-based
model for the German Credit Dataset \cite{credit}. \erwan{
While we could have used any relevant type of classifier for our experiments, general current focus is on neural networks regarding explainability. 
  The bank dataset
}classifies client profiles ($1,000$ of them), described by a set of
attributes, as good or bad credit risks. Multiple techniques have been
employed to model the credit risks on that dataset, which range from
$76.59\%$ accuracy for a SVM to $78.90\%$ for a hybrid between genetic
algorithm and a neural network \cite{ORESKI20142052}.

The dataset is composed of 24 features (some categorical ones, such as
sex, of status, were set to numerical). This thus constitutes a low
dimensional dataset as compared to current applications (observations
in \cite{andreou2018ndss} reported up to $893$ features for the sole
application of ad placement on user feeds on
Facebook). \erwan{Furthermore, modern classifiers are currently
  dealing with up to $512 \times 512 \times 3$ dimensions
  \cite{brock2018large}, which permit a significant increase in data
  processing and thus the capability to expand the amount of features
  taken into account for decision-making.}

The neural network we built\footnote{Code is made available at:
  \url{https://github.com/erwanlemerrer/bouncer_problem}.}  is
inspired by the one proposed \cite{KHASHMAN20106233} in 2010, and that
reached $73.17\%$ accuracy. It is a simple multi-layer perceptron,
with a single hidden layer of $23$ neurons (with sigmoid activations),
and a single output neuron for the binary classification of the input
profile to ``risky'' or not. In this experiment we use the Adam
optimizer and a learning rate of $0.1$ (leading to much faster
convergence than in \cite{KHASHMAN20106233}), with a validation split
of $25\%$.  We create 30 models, with an average accuracy of
$76.97\%@100$ epochs on the validation set (with a standard deviation
of $0.92\%$).

\erwan{In order to generate input profiles, we consider two scenarios.
  In A) we consider a scenario where a user sets a random value in a
  discriminative feature to try to find an IP. This yields rather
  artificial user profiles (that may be detected as such by the remote
  service provider). To have an aggregated view of this scenario, we
  proceed as follows. For each of the 30 models, we randomly select 50
  users as a test set (not used for training the previous models). We
  then repeat 500 times the following: we select a random user among
  the 50 and select a random discriminative feature among the four, to
  set a random (uniform) value in it (belonging to the domain of each
  selected feature, \eg from 18 to 100 in the age feature). This
  creates a set of $15,000$ fake profiles as inputs.
}

  \erwan{ In B), in order to have a more realistic scenario where
    profiles are created from real data from the dataset, we now
    proceed as follows.}  We also select $50$ profiles from the
  dataset as a test set, so we can perform our core experiment: the
  four discriminatory features of each of those profiles are
  sequentially replaced by the ones of the 49 remaining profiles; each
  resulting test profile is fed to the model for prediction. (This
  permits to test the model with realistic values in those
  features. This process creates 2450 profiles to search of an IP). We
  count the number of times the output risk label has switched, as
  compared to the original untouched profile fed to the model.  We
  repeat this operation on the 30 models to observe deviations.

\paragraph{The low probability of findings IPs at random}

\begin{figure}
\includegraphics[width=0.9\linewidth]{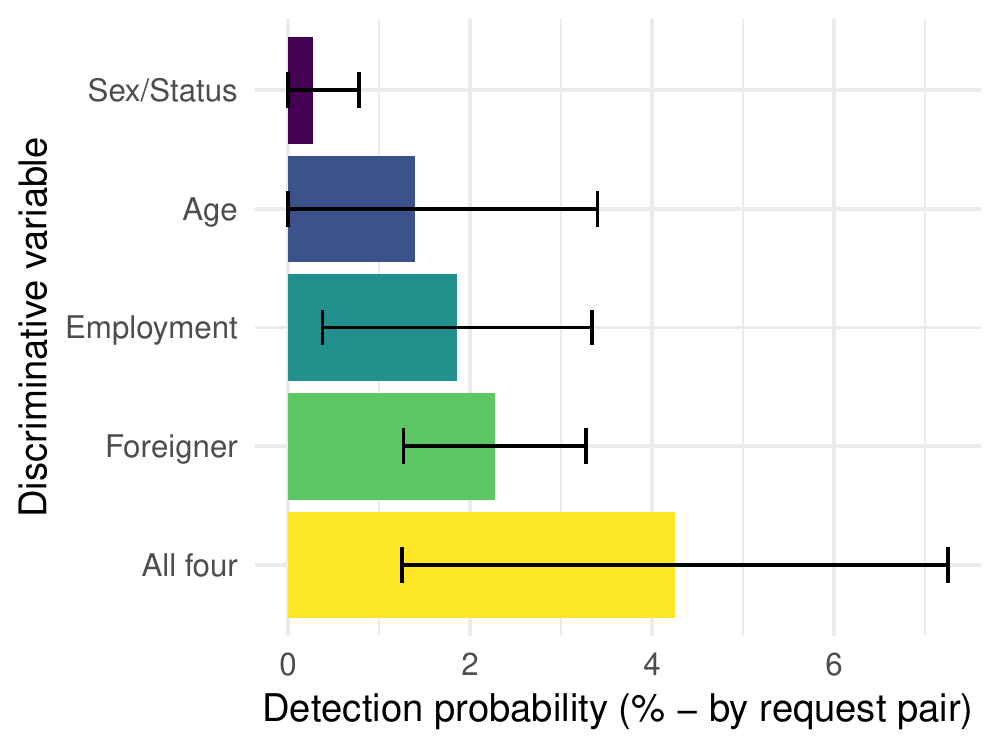}
  \caption{Percentage of label changes when swapping the discriminative features in the test set data for scenario B). \erwan{Bars indicate standard deviations.} Those indicate the low probability to spot a \textbf{PR} attack on the provider model).}
\label{f:credit}
\end{figure}

\erwan{In the case of scenario A), we compare the original label with
  the one obtained from each crafted input. We obtain $8.09\%$ of IPs
  (standard deviation of $4.08$).}

Figure \ref{f:credit} depicts for scenario B) the proportion of label
changes over the total number of test queries; recall that a label
change while considering two inputs constitutes an IP.  We observe
that if we just change one of the four features, we obtain on average
$1.86\%$, $0.27\%$, $1.40\%$, $2.27\%$ labels changes (for the
employment, sex/status, age, foreigner features, respectively), while
$4.25\%$ if the four features are simultaneously changed. (Standard
deviations are of $1.48\%$, $0,51\%$, $1,65\%$, $2,17\%$ and $3,13\%$,
respectively).

\erwan{This probability of $4.25\%$ is higher than our deterministic
lower bound $BP(\vert \mathcal{X}\vert)$ (Proposition \ref{lower}),
hinting that this discriminating classifier is easier to spot that the
worst-case one.  }
Moreover, since not finding an IP after some requests does not guarantee
the absence of a discriminating behaviour, we now look at the user-side
perspective: testing the absence of discrimination of a remote service.
It turns out that we can
compute an expectation of the number of queries for such a user to find
an IP.

Users can query the service with inputs, until they are
\emph{confident} enough that such pair does not exist.
Assuming one seeks a $99\%$ confidence level --that is, less than one
percent of chances to falsely detect a discriminating classifier as
non-discriminating--, and using the detection probabilities of Figure
\ref{f:credit}, we can compute the associated p-values. A user testing
a remote service based on those hypotheses would need to craft
respectively $490, 2555, 368, 301$ and $160$ (for the employment,
sex/status, age, foreigner, and all four respectively) 
pairs in the hope to decide on the existence or not of an IP, as presented in Figure \ref{f:creditConf} (please note the log-scale on the $y$-axis).

\erwan{Those experiments highlight} the hardness to experimentally
check for PR attacks.

\begin{figure}
\includegraphics[width=\linewidth]{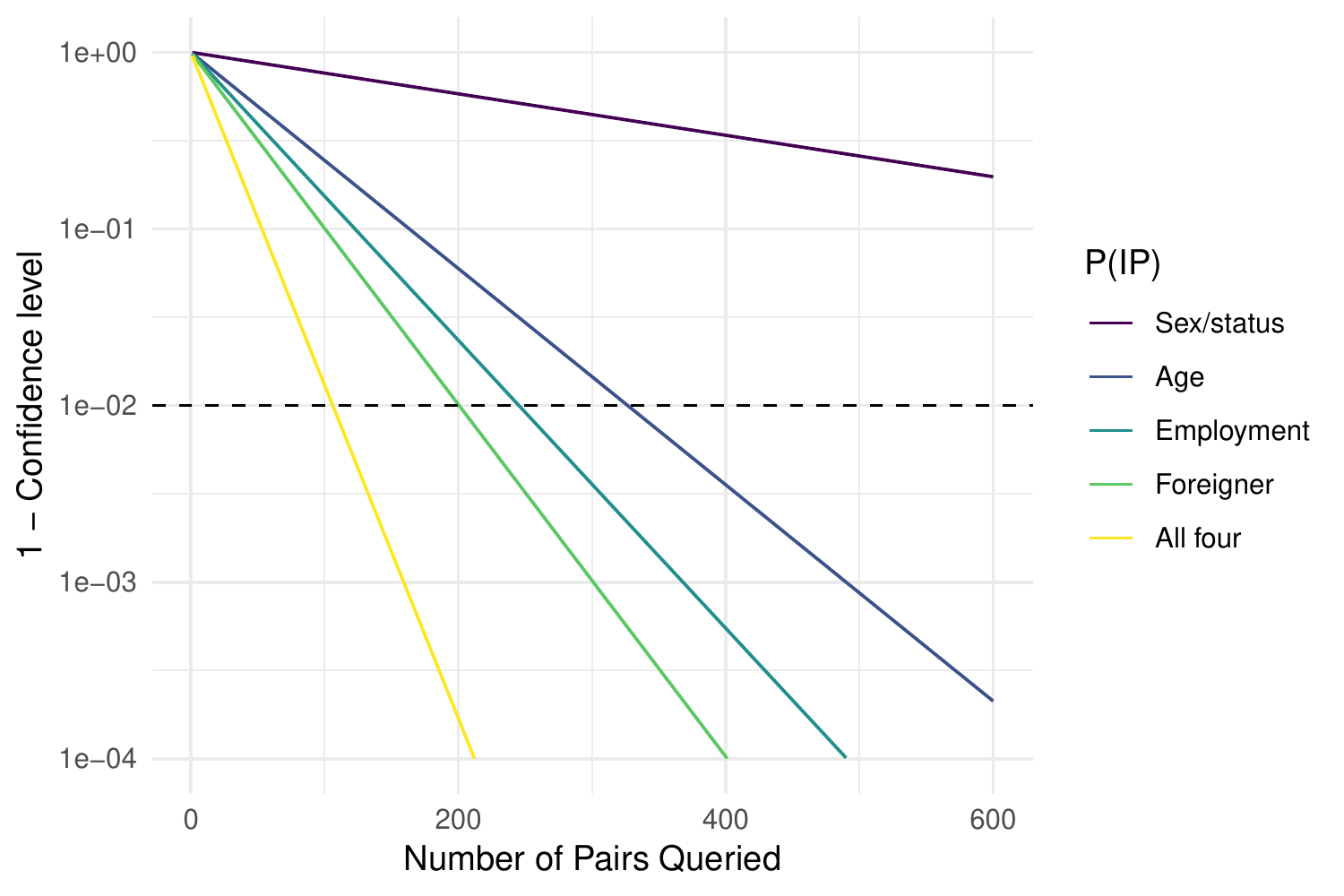}
  \caption{Confidence level as a function of the number of tested
    input pairs, based on the German Credit detection probability in Figure
    \ref{f:credit}. The dashed line represents a $99\%$ confidence level.}
\label{f:creditConf}
\end{figure}

\section{Discussion}
\label{s:discussion}

We now list in this section several consequences of the findings in
this paper, and some open questions.

\subsection{Findings and Applicability}

We have shown that a malicious provider can always craft a fake
explanation to hide its use of discriminatory features, by creating a
surrogate model for providing an explanation to a given user.  An
impossibility result follows, for a user to detect such an attack
while using a single explanation.  The detection by a user, or a group
of users, is possible only in the case of multiple and deliberate
queries ($BP(\epsilon>1)$); this process may require an exhaustive
search of the input space.

\gilles{However, we see that our practical experiment on the German
  Credit Dataset is far from this bound. Intuitively the probability
  of finding an IP is proportional to the "discrimination level" of a
  classifier. While quantifying such level is a difficult task, we
  explore a possible connection in the next paragraph.}

We note that the malicious providers have another advantage for
covering PR attacks. Since multiple queries must be issued to spot
inconsistencies via IP pairs, basic rate limiting mechanisms for
queries may block and ban the incriminated
users. Defenses of this kind, for preventing attacks on online machine services exposing APIs, are being proposed \cite{Hou:2019:MDA:3326285.3329042}.
This adds another layer of complexity for the observation of
misbehaviour.

\subsection{Connection with Disparate Impact}

We now briefly relate our problem to \textit{disparate impact}: a recent
article \cite{feldman2015certifying} proposes to adopt "a
generalization of the 80 percent rule advocated by the US Equal
Employment Opportunity Commission (EEOC)" as a criteria for
disparate impact. This notion of disparate impact proposes to
capture discrimination through the variation of outcomes of an
algorithm under scrutiny when applied to different population groups. 

More precisely, let $\alpha$ be the disparity
ratio. The authors propose the following formula, here adapted to
our notations \cite{feldman2015certifying}:
$$\alpha= \frac{\mathbb{P}(y|x_d=0)}{\mathbb{P}(y|x_d=1)},$$ where
\erwan{$X_d=\{0,1\}$ is the discriminative space reduced to a
binary discriminatory variable}. Their approach is to consider that if
$ \alpha<0.8$ then the tested algorithm could be qualified as discriminative.

To connect disparate impact to our framework, we can conduct the
following strategy. \erwan{Consider a classifier
$C$ having a disparate impact $\alpha$, and producing a binary decision
$C(x) \in \{ 0 = $ ``bounce'', $1 = $ ``enter''$\}$.} We search for Incoherent
Pairs as follows: first, pick $x\in X_l$ a set of legit features. Then
take $a=(x,x_d=0)$, representing the discriminated group, and
$b=(x,x_d=1)$ representing the undiscriminated group.  Then test $C$ on both
$a$ and $b$: if $C(a)\neq C(b) $ then $(a,b)$ is an IP. \erwan{The
probability $\mathbb{P}$} of finding an IP in this approach can be
written as $\mathbb{P}(IP)$. Let $A$ (resp. $B$) be the event ``$a$
enters'' (resp. ``$b$ enters'').

We can develop:\begin{align*} \mathbb{P}(IP) &= \mathbb{P}(C(a)\neq C(b)) \\
                               &=\mathbb{P}(A\cap\overline{B})+\mathbb{P}(\overline{A}\cap B) \\
                               &= \mathbb{P}(A) -\mathbb{P}(A \cap B) + \mathbb{P}(B) - \mathbb{P}(A \cap B)\\
                               &=\mathbb{P}(B)(1+\alpha) -2
                                 \mathbb{P}(A\cap B) \text{, since } \alpha= \mathbb{P}(A)/\mathbb{P}(B).
               \end{align*}
Using conditional probabilities, we have $\mathbb{P}(A\cap B) =
\mathbb{P}(B|A).\mathbb{P}(A)$. Thus
$\mathbb{P}(IP)= \mathbb{P}(B)(1+\alpha -2\alpha.\mathbb{P}(B|
A))$. Since the conditional probability $\mathbb{P}(B|
A)$ is difficult to assess without further hypotheses on $C$, let
us investigate two extreme scenarios:
\begin{itemize}
\item Independence: $A$ and $B$ are completely independent
  events, even though $a$ and $b$ share their legit features in $X_l$. This
  scenario, which is not very realistic, could model purely random
  decisions with respect to attributes from $X_d$. In this scenario $\mathbb{P}(B|
A) = \mathbb{P}(B)$.
\item Dependence: $A \Rightarrow B$: if $A$ is selected despite
  its membership to the discriminated group ($a=(x,0)$), then
  necessarily $b$ must be selected, as it can only be ``better'' from
  $C$'s perspective.  In this scenario $\mathbb{P}(B|
A) = 1$.
\end{itemize}

\begin{figure}[h]
  \includegraphics[width=1.05\linewidth]{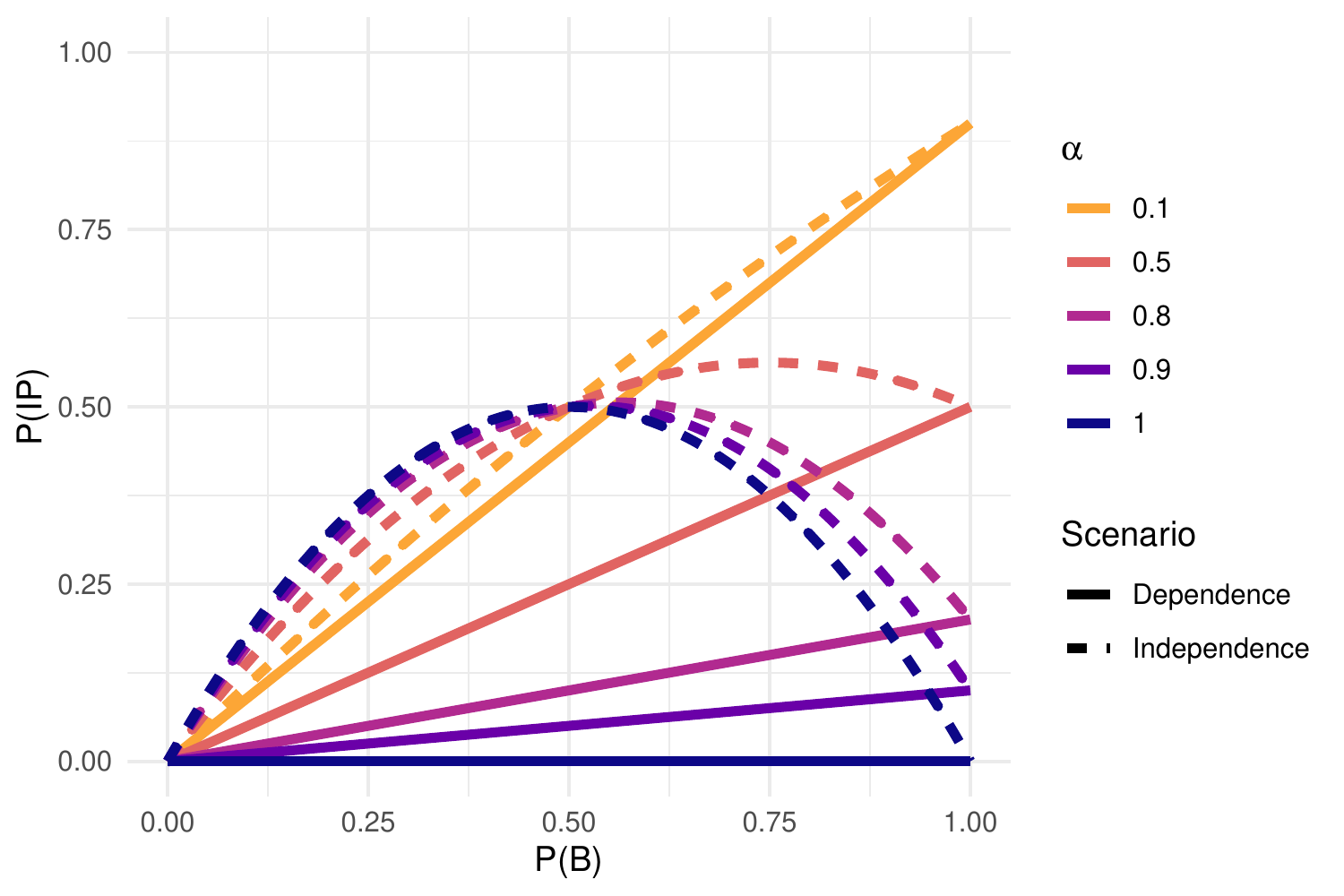}
  \caption{Probability to find an Incoherent Pair (IP), as a function
    of $\mathbb{P}(B)$ the probability of success for a
    non-discriminated group. \erwan{$\alpha$ represents the disparity ratio.}}
  \label{fig:disimp}
\end{figure}
Figure~\ref{fig:disimp} represents the numerical evaluation of our two
scenarios. First, it shows that the probability of finding an IP
strongly depends on the probability of a success for the
non-discriminated group $\mathbb{P}(B)$. Indeed, since the
discriminated group has an even lower probability of success, a low
success probability for the non-discriminated group implies frequent
cases where both $a$ and $b$ are failures, which does not
constitutes an IP.

In the absence of disparate impact ($\alpha=1$), both scenarios provide
very different results: the independence scenario easily
identifies IPs --which is coherent with the "random"
nature of the independence assumption--. This underlines the unrealistic
nature of the independence scenario in this context. With a high disparate impact
however (\eg $\alpha=0.1$), the discriminated group has a high 
probability of failure. Therefore the probability of finding an IP is very close
to the simple probability of the non-discriminated group having a success
$\mathbb{P}(B)$, regardless of the considered scenario.

The dependence scenario nicely illustrates a natural connection: the
higher the disparate impact, the higher the probability to find an IP.
While this only constitutes a thought experiment, we believe this
highlights possible connections with standard discrimination measures
and conveys the intuition that in practice, the probability of finding
IPs exposing a PR attack strongly depends on the
intensity of the discrimination hidden by that PR attack.

\subsection{Open Problems for Remote Explainability}

\paragraph{On the test efficiency}

It is common for fairness assessment tools to leverage testing. As
the features that are considered discriminating are often precise
\cite{10.1007/978-3-642-22589-5_20,Galhotra:2017:FTT:3106237.3106277},
the test queries for fairness assessment can be targeted and some
notions of efficiency in terms of the amount of requests can be
derived. This may be done by sampling the feature space
under question for instance (as in work by Galhotra et
al. \cite{Galhotra:2017:FTT:3106237.3106277}).

Yet, it appears that with current applications such as social
networks \cite{andreou2018ndss}, users spend a considerable amount of time online, producing
more and more data that turn into features, and also are the basis to
the generation of other meta-features. In that context, the full scope
of features, discriminating or not, may not be clear to a user. This
makes exhaustive testing even theoretically unreachable, due to the
very likely non-complete picture of what providers are using to issue
decisions. This is is another challenge on the way to remote
explainability, if providers are not willing to release a complete
and precise list of all attributes leveraged in their system.

\paragraph{Towards a provable explainability?}
Some other computing applications, such as data storage or intensive
processing also have questioned the possibility of malicious service
providers in the past. Motivated by the plethora of offers in the
cloud computing domain and the question of quality of service,
protocols such as \textit{proof of data possession} \cite{pdp}, or
\textit{proof-based verifiable computation} \cite{comp-verif}, assume
that the service provider might be malicious. A solution to still have
services executed remotely in this context is then to rely on
cryptographic protocols to formally verify the work performed
remotely. To the best of our knowledge, no such provable process
logic has been adapted to explainability. That is certainly an
interesting development to come.

\section{Related Work}
\label{s:related}

\paragraph{Explaining in-house models}
As a consequence of the major impact of machine learning models in
many areas of our daily life, the notion of \textit{explainability}
has been pushed by policy makers and regulators.  Many works address
explainability of inspected model decisions on a local setup (please
refer to surveys
\cite{guidotti2018survey,datta2016algorithmic,molnar2019}) --some
specifically for neural network models \cite{NIPS2018_8141}--, where
the number of requests to the model is unbounded. Regarding the
question of fairness, a recent work specifically targets the fairness
and discrimination of in-house softwares, by developing a
testing-based method \cite{Galhotra:2017:FTT:3106237.3106277}.

\paragraph{Dealing with remote models}
The case of models available through a remote black-box interaction
setup is particular, as external observers are bound to scarce data
(labels corresponding to inputs, while being limited in the number of
queries to the black-box \cite{tramer}).  Adapting the explainability
reasoning to models available in a black box setup is of a major
societal interest: Andreou et al. \cite{andreou2018ndss} shown that
Facebook's explanations for their ad platform are incomplete and
sometimes misleading. They also conjecture that malicious service
providers can ``hide'' sensitive features used, by explaining
decisions with very common ones. In that sense, our paper is exposing
the hardness of explainability in that setup, confirming that
malicious attacks are possible.  Milli et
al. \cite{Milli:2019:MRM:3287560.3287562} provide a theoretical ground
for reconstructing a remote model (a two-layer ReLu neural network)
from its explanations and input gradients; if further research proves
the approach practical for current applications, this technique may
help to infer the use of discriminatory features in use by the service
provider.

\paragraph{Operating without trust: the domain of security}
In the domain of security and cryptography, some similar setups have
found a large body of work to solve the trust problem in remote
interacting systems. In \textit{proof of data possession} protocols
\cite{pdp}, a client executes a cryptographic protocol to verify the
presence of her data on a remote server; the challenge that the
storage provider responds to assesses the possession or not of some
particular piece of data. Protocols can give certain or probabilistic
guarantees.  In \textit{proof-based verifiable computation}
\cite{comp-verif}, the provider returns the results of a queried
computation, along with a proof for that computation. The client can
then check that the computation indeed took place.  These schemes,
along with this paper exhibiting attacks on remote explainability, motivate
the need for the design of secure protocols.

\paragraph{Discrimination and bias detection approaches}
Our work is complementary to classic discrimination detection in
automated systems. In contrast to works on \textit{fairness} \cite{fair-phil} that attempt to
identify and measure discrimination from systems, our work does not
aim at spotting discrimination, as we have shown it can be hidden by
the remote malicious provider. We instead are targeting the occurrence
of incoherent explanations produced by such a provider in the will to
cover its behavior, which is a a completely different nature than
fairness based test suites. Galhotra et
al. \cite{Galhotra:2017:FTT:3106237.3106277}, inspired by statistical
causality \cite{CIS-247618}, for instance propose to create input
datasets for observing discrimination on some specific features by the
system under test.

\erwan{While there are numerous comments and proposals for good
  practices when releasing models that may include forms of bias
  \cite{10.1145/3287560.3287596}, the automatic detection of bias on
  the user side is also of interest for the community.  For instance,
  researchers seek to detect the Simpson's Paradox
  \cite{10.2307/2284382} in the data
  \cite{alipourfard2018using}. Another work makes use of
  \textit{causal graphs} to detect \cite{10.1145/3097983.3098167} a
  potential discrimination in the data, while authors propose in
  \cite{6175897} to purge the data so that direct and/or indirect
  discriminatory decision rules are converted to legitimate
  classification rules. Some works are specific to some applications, such as financial ones \cite{zhang2019fairness}.
  Note that those approaches by
  definition require an access to the training data, which is a too
  restrictive assumption in the context of our target contribution.
}

\erwan{The work in \cite{tan2017distillandcompare} proposes to
  leverage transfer learning (or \textit{distillation}) to mimic the behaviour
  of a black box model, here a credit scoring model. A collection
  campaign is assumed to provide a labeled dataset with the risk
  scores, as produced by the model and the ground-truth outcome. From
  this dataset is trained a model aiming at mimicking the black box as
  close as possible. Both models are then compared on their outcome,
  and a method to estimate the confidence interval for the variance of
  results is presented. The trained model can then be queried to
  assess potential bias. This approach proves solid guarantees when
  one assumes that the dataset is extracted from a black box that does
  not aim to bias its outputs to prevent audits of that form.}

\paragraph{The rationalization of explanations}
More closely related to our work is the recent paper
by Aivodji et al. \cite{pmlr-v97-aivodji19a}, that introduces the concept of
rationalization, in which a black-box algorithm is approximated by a
surrogate model that is "fairer" that the original black-box. In our
terminology, they craft $C'$ models that optimise arbitrary fairness
objectives. To achieve this, they explore decision tree models trained
using the black-box decisions on a predefined set of inputs.
This produces another argument against black-box explainability in a remote context.
The main technical difference with our tree algorithm section
\ref{s:dt} is that their surrogates $C'$ optimises an exterior metric
(fairness) at the cost of some coherence (fidelity in the authors'
terminology). In contrast, our illustration section
\ref{s:dt} produces surrogates with perfect coherence that do not
optimise any exterior metric such as fairness. In our model, spotting
an incoherence (\ie the explained model produces a $y$ while the
black-box produces a $\bar{y}$) would directly provide a proof of
manipulation and reveal the trickery.
Interestingly, the incoherent pair approach fully applies in the context of
their model surrogates, as it arises as soon as more than one
surrogate is used (regardless of the explanation).  %
Our paper focuses on the user-side observation of explanations, and
users ability to discover such attacks. We rigorously prove that
single queries are not sufficient to determine a manipulation, and
that the problem is hard even in the presence of multiple queries and
observations. %

\section{Conclusion}
\label{s:conclusion}

In this paper, we studied explainability in a remote context, which is
sometimes presented as a way to satisfy society's demand for
transparency facing automated decisions. We prove it is unwise to
blindly trust those explanations:
  like humans, algorithms can easily
  hide the true motivations of a decision when asked for explanation.
To that end, we presented an attack that generates
  explanations to hide the use of an arbitrary set of features by a
  classifier. While this construction applies to any classifier
  queried in a remote context, we also presented a concrete
  implementation of that attack on decision trees.  
On the defensive side, we have shown that such a manipulation cannot
be spotted by one-shot requests, which is unfortunately the nominal
use-case. However, the proof of such trickery (pairs of
classifications that are not coherent) necessarily exists. We further
evaluated in a practical scenario the probability of finding such
pairs, which is low. The attack is thus arguably impractical to detect
for an isolated user.

We conclude that this must consequently question the whole concept of
the explainability of a remote model operated by a third party
provider, at the very least. A research direction is to develop secure
schemes in which the involved parties can trust the exchanged
information about decisions and their explainability, as enforced by
new protocols. A second line of research may be the collaboration of
users observations for spotting the attack in an automated
way. \erwan{Indeed, as done by Chen et
  al. \cite{10.1145/2815675.2815681} for understanding the impact of
  Uber surge pricing on passengers and drivers (by deploying 43 Uber
  accounts that act as clients), data can be put in common to retrieve
  information more reliably. The anonymization of users data if a pool of measurements is made public is for sure a crucial point to ensure scalable observation of black box decision-making systems.}
We
believe this is an interesting development to come, in relation with
the promises of AI and automated decisions processes.

\bibliographystyle{abbrv}
{\small
\bibliography{biblio}
}

\end{document}